\newcommand{\be}{\begin{eqnarray} \begin{aligned}}
\newcommand{\ee}{\end{aligned} \end{eqnarray} }
\newcommand{\benn}{\begin{eqnarray*} \begin{aligned}}
\newcommand{\eenn}{\end{aligned} \end{eqnarray*} }
\newcommand{\vx}{{\mathbf x}}
\newcommand{\vf}{{\mathbf f}}
\newcommand{\vw}{{\mathbf w}}
\newcommand{\vr}{{\boldsymbol{\rho}}}
\newcommand{\vy}{{\mathbf y}}
\newcommand{\la}{{\big \langle}}
\newcommand{\ra}{{\big \rangle}}
\def\thm@space@setup{%
  \thm@preskip=\parskip \thm@postskip=0pt
}
\newtheorem{theorem}{Theorem}[section]
\newtheorem*{theorem*}{Theorem}
\newtheorem{corollary}[theorem]{Corollary}
\title{Sifting Common Information from Many Variables}
\author{
Greg Ver Steeg, Shuyang Gao, Kyle Reing, Aram Galstyan \\
University of Southern California\\
Information Sciences Institute \\
gregv@isi.edu, gaos@usc.edu, reing@usc.edu, galstyan@isi.edu
}
\begin{document}
\maketitle


\begin{abstract}
Measuring the relationship between any \emph{pair} of variables is a rich and active area of research that is central to scientific practice. In contrast, characterizing the common information among any \emph{group} of variables is typically a theoretical exercise with few practical methods for high-dimensional data.
A promising solution would be a multivariate generalization of the famous Wyner common information, but this approach relies on solving an apparently intractable optimization problem. 
We leverage the recently introduced information sieve decomposition to formulate an incremental version of the common information problem that admits a simple fixed point solution, fast convergence, and complexity that is linear in the number of variables. 
This scalable approach allows us to demonstrate the usefulness of common information in high-dimensional learning problems.
The sieve outperforms standard methods on dimensionality reduction tasks, solves 
a blind source separation problem that cannot be solved with ICA, and accurately recovers structure in brain imaging data. 
\end{abstract}

\section{Introduction} 

One of the most fundamental measures of the relationship between two random variables, $X_1, X_2$, is given by the mutual information, $I(X_1 ; X_2)$. 
While mutual information measures the strength of a relationship, the ``common information'' provides a concrete representation, $Y$, of the information that is shared between two variables. 
According to \cite{wyner_common}, if $Y$ contains the common information between $X_1, X_2$, then we should have $I(X_1;X_2 |Y)=0$, i.e., $Y$ makes the variables conditionally independent. 
We can extend this idea to many variables using the multivariate generalization of mutual information called total correlation~\cite{watanabe}, so that conditional independence is equivalent to the condition $TC(X_1,\ldots, X_n|Y)=0$~\cite{xu_wyner}. 
The most succinct $Y$ that has this property represents the multivariate common information in $X$ but finding such a $Y$ in general is a challenging, unsolved problem. 

The main contribution of this paper is to show that the concept of common information, long studied in information theory for applications like distributed source coding and cryptography~\cite{common}, is also a useful concept for machine learning.  
Machine learning applications have been overlooked due to the intractability of recovering common information for high-dimensional problems. 
We propose a concrete and tractable algorithmic approach to extracting common information by exploiting a connection with the recently introduced ``information sieve'' decomposition~\cite{sieve}. 
The sieve decomposition works by searching for a single latent factor that reduces the conditional dependence in the data as much as possible. Then the data is transformed to remove this dependence and the ``remainder information'' trickles down to the next layer. The process is repeated until all the dependence has been extracted and the remainder contains nothing but independent noise. Thm.~\ref{decomp} connects the latent factors extracted by the sieve to a measure of common information.

Our second contribution is to show that under the assumptions of linearity and Gaussianity this optimization has a simple fixed-point solution (Eq.~\ref{eq:fixed}) with fast convergence and computational complexity linear in the number of variables. 
Although our final algorithm is limited to the linear case, extracting common information is an unsolved problem and our approach represents a logical first step in exploring the value of common information for machine learning. We offer suggestions for generalizing the method. 

Our final contribution is to validate the usefulness of our approach on some canonical machine learning problems. 
While PCA finds components that explain the most variation, the sieve discovers components that explain the most dependence, making it a useful complement for exploratory data analysis. 
Common information can be used to solve a natural class of blind source separation problems that are impossible to solve using independent component analysis (ICA) due to the presence of Gaussian sources. Finally, we show that common information outperforms standard approaches for dimensionality reduction and recovering structure in fMRI data.

\section{Preliminaries}\label{sec:background}
Using standard notation~\cite{cover}, capital $X_i$ denotes a continuous random variable whose instances are denoted in lowercase, $x_i$. We abbreviate multivariate random variables, $X \equiv X_{1:n} \equiv X_1,\ldots,X_n$, with an associated probability density function, $p_X(X_1=x_1,\ldots, X_n=x_n)$, which is typically abbreviated to $p(\vx)$, with vectors in bold.  We will index different groups of multivariate random variables with superscripts, $X^k$, as defined in Fig.~\ref{fig:sieve}. We let $X^0$ denote the original observed variables and we omit the superscript for readability when no confusion results. 

Entropy is defined as $H(X) \equiv \langle \log 1/p(\vx) \rangle$, where we use brackets for expectation values. Conditional multivariate mutual information, or conditional total correlation, is defined as the Kullback-Leibler divergence between the joint distribution, and the one that is conditionally independent. 
\be\label{eq:tc}
TC(X|Y) \equiv  D_{KL}\left(p(\vx | y) \Big \| \prod_{i=1}^n p(x_i | y)\right)
\ee
This quantity is non-negative and zero if and only if all the $X_i$'s are independent conditioned on $Y.$ 
$TC(X)$ can be obtained by dropping the conditioning on $Y$ in the expression above. In other words, $TC(X)=0$ if and only if the variables are (unconditionally) independent. 
If $Y$ were the hidden source of all dependence in $X$, then $TC(X|Y)=0$. 
Therefore, we consider the problem of searching for a factor $Y$ that minimizes $TC(X|Y)$. 
In the statement of the theorems we make use of shorthand notation, $TC(X;Y) \equiv TC(X)-TC(X|Y)$, which is the reduction of TC after conditioning on $Y$.  This notation mirrors the definition of mutual information between two groups of random variables, $X$ and $Y$, as the reduction of uncertainty in one variable, given information about the other, $ I(X;Y) = H(X) - H(X|Y)$.

\section{Extracting Common Information}\label{sec:decomposition}

For $Y$ to contain the common information in $X$, we need $TC(X|Y)=0$. 
Instead of enforcing the condition that $TC(X|Y)=0$ and looking for the most succinct $Y$ that satisfies this condition, as Wyner does~\cite{wyner_common}, we consider the dual formulation where we minimize $TC(X|Y_1,\ldots,Y_r)$ subject to constraints on $r$, the size of the state space~\cite{gastpar}. 
This optimization can be written equivalently as follows. 
\begin{eqnarray}\label{eq:opt1}
\min_{\vy = \vf(\vx)} TC(X_1,\ldots,X_n|Y_1,\ldots,Y_r)
\end{eqnarray}
We will show in Thm.~\ref{decomp} that an upper bound for this objective is obtained by solving a sequence of optimization problems of the following form, indexed by $k$. 
\begin{eqnarray}\label{eq:opt_decompose}
\min_{y_k = f(\vx^{k-1})} TC(X_1^{k-1},\ldots,X_{n_k}^{k-1}|Y_k)
\end{eqnarray}
The definition of $X^{k}$ is discussed next, but the high level idea is that we have reduced the difficult optimization over many latent factors in Eq.~\ref{eq:opt1} to a sequence of optimizations with a single latent factor in Eq.~\ref{eq:opt_decompose}. Each optimization gives us a tighter upper bound on our original objective, Eq.~\ref{eq:opt1}.

\paragraph{Incremental Decomposition} We begin with some input data, $X$, and then construct $Y_1$ to minimize $TC(X|Y_1)$.  After doing so, we would like to transform the original data into the remainder information, $X^1$, so that we can use the same optimization to learn a factor, $Y_2$, that extracts more common information that was not already captured by $Y_1$. We diagram this construction at layer $k$ in Fig.~\ref{fig:sieve} and show in Thm~\ref{incremental} the requirements for constructing the remainder information. 
The result of this procedure is encapsulated in Cor.~\ref{iterative} which says that we can iterate this procedure and $TC(X|Y_1,\ldots,Y_k)$ will be reduced at each layer until it reaches zero and $Y$ captures all the common information.
\begin{figure}[tbp] 
   (a)\includegraphics[width=0.93\columnwidth]{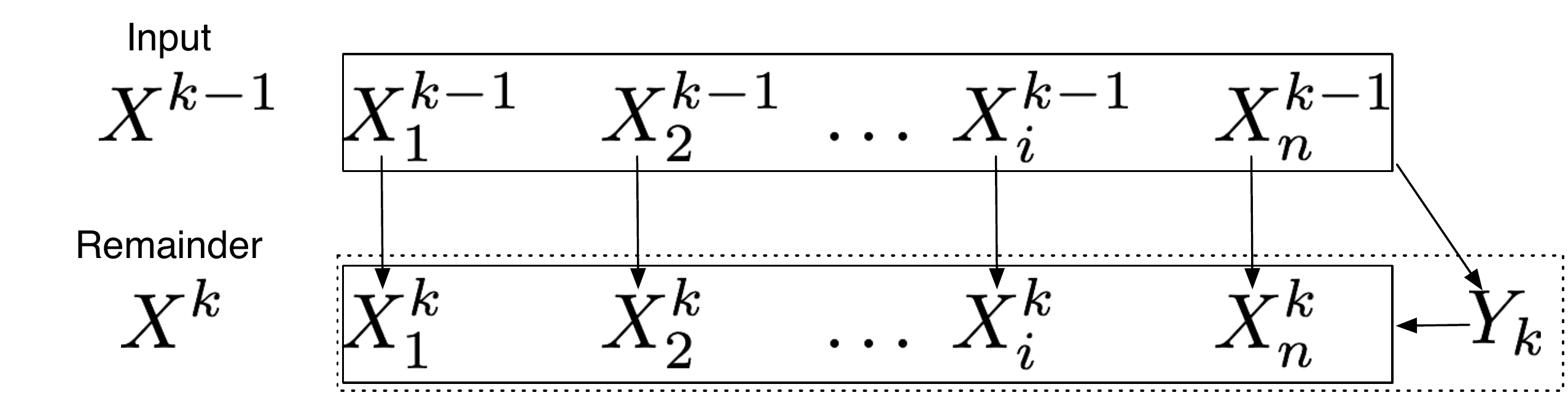} \\ 
   \begin{center}
   (b)\begin{minipage}[c]{0.9\columnwidth}
$$
\arraycolsep=1.6pt\def\arraystretch{1.2}
   \begin{array}{rcccccc} 
      X^0: & \bf X_1 & \ldots & \bf X_n & & & \\
      X^1: & X_1^1 & \ldots & X_n^1 & \color{red} Y_1 & & \\
      X^2: & X_1^2 & \ldots & X_n^2 &Y_1^2 & \color{red} Y_2 & \\
      \cdots \\
      X^k: & X_1^k & \ldots & X_n^k &Y_1^k &Y_2^k & \color{red} Y_k \\
   \end{array}
$$
\end{minipage}
\end{center}
   \caption{\small (a) This diagram describes one layer of the sieve. $Y_k$ is some function of the $X_i^{k-1}$'s that is optimized to capture dependence. The remainder, $X_i^k$ contains information that is not explained by $Y_k$. (b) We summarize the naming convention for multiple layers. 
   }  
   \label{fig:sieve}
\end{figure}

\begin{theorem} \label{incremental}
{\bf Incremental decomposition of common information}\quad
\label{exact}
For $Y_k$ a function of $X^{k-1}$, the following decomposition holds,
\be\label{eq:exact}
TC(X^{k-1}) = TC(X^k) + TC(X^{k-1};Y_k), 
\ee
if the remainder information $X^k$ satisfies two properties. \\
\-\quad 1. Invertibility: there exist functions $g, h$ so that \\ \-\qquad $x_i^{k-1} = g(x_i^k, y_k)$ and  $x_i^k = h(x_i^{k-1}, y_k)$\\
\-\quad 2. Remainder contains no information about $Y_k$: \\ \-\qquad $\forall i, I(X_i^k ; Y_k) = 0$
\end{theorem}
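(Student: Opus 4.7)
The plan is to reduce the desired equation to a single identity on conditional total correlations and then discharge it in two steps. Starting from the definition $TC(X^{k-1};Y_k)\equiv TC(X^{k-1})-TC(X^{k-1}|Y_k)$, the claim $TC(X^{k-1})=TC(X^k)+TC(X^{k-1};Y_k)$ collapses to
$$TC(X^{k-1}|Y_k)=TC(X^k),$$
and the whole proof reduces to establishing this single equality.

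First I would use condition 1 to show $TC(X^{k-1}|Y_k)=TC(X^k|Y_k)$. Conditional on any realization $Y_k=y_k$, the maps $g(\cdot,y_k)$ and $h(\cdot,y_k)$ are mutual inverses on the $i$-th coordinate, so $X_i^{k-1}$ and $X_i^k$ are in bijection given $Y_k$ and therefore have equal conditional entropy. Applying the same componentwise bijection jointly gives $H(X^{k-1}|Y_k)=H(X^k|Y_k)$. Plugging both facts into the definition $TC(Z|Y_k)=\sum_i H(Z_i|Y_k)-H(Z|Y_k)$ for $Z=X^{k-1}$ and $Z=X^k$ yields the desired equality.

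Second, I would use condition 2 to strip off the conditioning, i.e., to show $TC(X^k|Y_k)=TC(X^k)$. Expanding the difference gives
$$TC(X^k|Y_k)-TC(X^k)=I(X^k;Y_k)-\sum_i I(X_i^k;Y_k),$$
so the per-coordinate hypotheses $I(X_i^k;Y_k)=0$ kill the sum but leave the joint term $I(X^k;Y_k)$.

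This joint independence is where the main obstacle sits: pairwise independences $I(X_i^k;Y_k)=0$ do not in general force $I(X^k;Y_k)=0$ (classical XOR-style constructions make the gap concrete). Under the paper's linear-Gaussian instantiation, however, each $X_i^k$ is $X_i^{k-1}$ minus a scalar multiple of $Y_k$, so $(X^k,Y_k)$ is jointly Gaussian and coordinatewise uncorrelatedness with $Y_k$ upgrades automatically to joint uncorrelatedness and hence full independence. I would therefore read condition 2 as the joint independence that the Gaussian construction in fact delivers, at which point $TC(X^k|Y_k)=TC(X^k)$ and chaining with the first step yields the theorem.
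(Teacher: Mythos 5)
There is a genuine gap, and it stems from a misreading of what the remainder information $X^k$ is. Per Fig.~1(b) and the paper's own proof (which sets $\bar X \equiv \bar X_1,\ldots,\bar X_n, Y$), the collection $X^k$ \emph{includes $Y_k$ as one of its coordinates}. You instead take $X^k$ to be only the transformed coordinates $\bar X_{1:n}$, which is why your reduction lands on $TC(X^{k-1}|Y_k)=TC(\bar X_{1:n})$ and you are left with an uncancelled joint term $I(\bar X_{1:n};Y_k)$ that pairwise independence cannot kill. Under the correct reading the target identity is $TC(\bar X_{1:n},Y_k)=TC(X^{k-1}|Y_k)$, and since $TC(\bar X_{1:n},Y_k)=TC(\bar X_{1:n})+I(\bar X_{1:n};Y_k)$, the problematic joint mutual information appears on \emph{both} sides and cancels: your own entropy bookkeeping, applied to $TC(\bar X_{1:n},Y_k)$ rather than $TC(\bar X_{1:n})$, closes the proof using only the stated per-coordinate hypotheses. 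Consequently the ``main obstacle'' you identify is an artifact of the misread definition, and your patch via joint Gaussianity needlessly demotes a fully general theorem (stated with no distributional assumptions, and needed in that generality for Cor.~3.2 and Thm.~3.3) to the linear--Gaussian special case.

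A secondary, fixable issue: for continuous variables, a conditional bijection does \emph{not} give equal conditional differential entropies --- $H(\bar X_i|Y_k)$ and $H(X_i^{k-1}|Y_k)$ differ by the expected log-Jacobian of $h(\cdot,y_k)$. The argument still goes through because the transformation acts coordinatewise, so the sum of the marginal Jacobian terms exactly cancels the joint Jacobian term inside $TC(\cdot|Y_k)=\sum_i H(\cdot_i|Y_k)-H(\cdot|Y_k)$; this is precisely the change-of-variables cancellation the paper invokes in its single log-ratio computation. You should make that cancellation explicit rather than asserting entropy invariance.
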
 
\begin{proof}
We refer to Fig.~\ref{fig:sieve}(a) for the structure of the graphical model. We set $\bar X \equiv \bar X_1,\ldots, \bar X_n, Y$ and we will write $\bar X_{1:n}$ to pick out all terms except $Y$. 
Expanding the definition of $TC(X;Y)$, the equality in Eq.~\ref{eq:exact} becomes
\benn
TC(\bar X) - TC(X|Y) = \left\langle \log \frac{p(\bar x, y) \prod_i p(x_i|y)}{p(y) p(x|y) \prod_i p(\bar x_i) } \right\rangle = 0
 \eenn 
We have to show that this quantity equals zero under the assumptions specified. First, we multiply the fraction by one by putting $\prod_i p(\bar x_i |y)$ terms in the numerator and denominator.  
After applying condition (2) that $I(\bar X_i;Y)=0$, we can remove two terms leaving the following. 
\benn
\left\langle \log \frac{p(\bar x | y) \prod_i p(x_i|y)}{p(x | y) \prod_i p(\bar x_i|y) } \right\rangle
\eenn
If condition (1) of the theorem is satisfied, then, conditioned on $y$, $\bar x_i$ and $x_i$ are related by a deterministic formula. We can see from applying the change of variables formula for probability distributions that the terms in this expression cancel, leaving us with $\la \log 1 \ra=0$, as we intended to prove. 
\end{proof}
The decomposition above was originally introduced for discrete variables as the ``information sieve''~\cite{sieve}; the continuous formulation we introduce here replaces the first condition used in the original statement with an analogous one that is appropriate for continuous variables.
Note that because we can always find non-negative solutions for $TC(X^{k-1};Y_k)$, it must be that $TC(X^k) \leq TC(X^{k-1})$. In other words, the remainder information is more independent than the input data. This is consistent with the intuition that the sieve is sifting out the common information at each layer.
%
\begin{corollary}{\bf Iterative decomposition of TC} \quad
\label{iterative}
With a hierarchical representation where each $Y_k$ is a function of $X^{k-1}$ and $X^k$ is the remainder information as defined in Thm~\ref{incremental}, $
TC(X) = TC(X^r) +  \sum_{k=1}^r TC(X^{k-1} ; Y_k).$
\end{corollary}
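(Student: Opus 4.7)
The plan is to obtain this corollary by a straightforward induction on $r$, using Thm.~\ref{incremental} as the one-step decomposition that we telescope.

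For the base case $r=1$, since $X = X^0$ by the naming convention in Fig.~\ref{fig:sieve}(b), the claim reduces exactly to Eq.~\ref{eq:exact} applied at $k=1$, namely $TC(X^0) = TC(X^1) + TC(X^0; Y_1)$. This holds directly provided $Y_1$ is a function of $X^0$ and $X^1$ is constructed as the remainder information satisfying the invertibility and zero-information conditions of Thm.~\ref{incremental}, which is precisely what the hypothesis of the corollary assumes.

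For the inductive step, I would assume
\begin{equation*}
TC(X) = TC(X^{r-1}) + \sum_{k=1}^{r-1} TC(X^{k-1}; Y_k),
\end{equation*}
and then apply Thm.~\ref{incremental} at layer $r$. Since by hypothesis $Y_r$ is a function of $X^{r-1}$ and $X^r$ is the corresponding remainder information, Eq.~\ref{eq:exact} gives $TC(X^{r-1}) = TC(X^r) + TC(X^{r-1}; Y_r)$. Substituting this into the inductive hypothesis and absorbing the new term into the sum yields the claimed identity at level $r$.

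There is essentially no obstacle here beyond being careful about bookkeeping: the nontrivial content lives entirely in Thm.~\ref{incremental}, and the corollary is just its telescoped form. The one point worth flagging is that at each layer the two conditions of Thm.~\ref{incremental} must hold anew, so the induction implicitly relies on the hypothesis that the hierarchical representation is constructed so that every $X^k$ is a valid remainder of $X^{k-1}$ with respect to $Y_k$; once this is granted, the proof is immediate.
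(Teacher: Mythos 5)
Your proof is correct and matches the paper's approach: the paper simply states that the corollary ``follows from repeated application of Eq.~\ref{eq:exact},'' which is exactly the telescoping induction you carry out explicitly. Your added remark that the two conditions of Thm.~\ref{incremental} must hold at every layer is a reasonable clarification of what the paper leaves implicit.
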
 
This follows from repeated application of Eq.~\ref{eq:exact}. 
$TC(X)$ is a constant that depends on the data. For high-dimensional data, it is impossible to measure $TC(X)$, but by learning latent factors extracting progressively more dependence, we get a sequence of better bounds.  

\begin{theorem} \label{decomp}
{\bf Decomposition of common information}\quad For the sieve decomposition, the following bound holds.
\benn
TC(X|Y_{1:r}) & \leq TC(X^r) = TC(X) -  \sum_{k=1}^r TC(X^{k-1} ; Y_k)
\eenn
\end{theorem}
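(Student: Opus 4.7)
The plan is to derive the equality from the iterated decomposition and then bridge the inequality via an intermediate conditional total correlation obtained by a change of variables. The equality piece $TC(X^r) = TC(X) - \sum_{k=1}^r TC(X^{k-1}; Y_k)$ is trivial: it is just Corollary~\ref{iterative} rearranged. All the content lies in the bound $TC(X|Y_{1:r}) \leq TC(X^r)$.

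First I would partition the $n+r$ coordinates of $X^r$ into the transformed originals $\tilde X^r \equiv (X_1^r, \ldots, X_n^r)$ and the processed latents $\tilde Y \equiv (Y_1^r, \ldots, Y_{r-1}^r, Y_r)$. Expanding $TC(\tilde X^r, \tilde Y)$ in terms of marginal and joint entropies should yield the identity
\[
TC(\tilde X^r, \tilde Y) = TC(\tilde X^r | \tilde Y) + TC(\tilde Y) + \sum_{i=1}^n I(X_i^r; \tilde Y).
\]
Because $TC(\tilde Y)$ and each mutual information are non-negative, dropping them gives the chain-rule inequality $TC(X^r) \geq TC(\tilde X^r | \tilde Y)$.

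Next, I would show that the conditioning sets $\tilde Y$ and $Y_{1:r}$ are interchangeable and that, conditional on $Y_{1:r}$, the vectors $X$ and $\tilde X^r$ are coordinate-wise bijective. Iterating invertibility~(1) of Theorem~\ref{incremental} applied to the latents themselves, each $Y_j$ is recovered from $Y_j^r$ together with $Y_{j+1}, \ldots, Y_r$ by repeated application of $g$, so $\tilde Y$ and $Y_{1:r}$ determine each other deterministically; hence $TC(\tilde X^r | \tilde Y) = TC(\tilde X^r | Y_{1:r})$. Iterating the same invertibility on the original variables gives $X_i = g(g(\cdots g(X_i^r, Y_r) \cdots, Y_2), Y_1)$, a coordinate-wise bijection between $X$ and $\tilde X^r$ at each fixed value of $Y_{1:r}$, and a change-of-variables computation then yields $TC(\tilde X^r | Y_{1:r}) = TC(X | Y_{1:r})$. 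Chaining the three pieces delivers the bound.

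The main obstacle I anticipate is the bookkeeping: keeping track of exactly what lives in $X^r$ at each stage and verifying that the sieve's remainder operation propagates cleanly through previously extracted latents so that $\tilde Y$ really is a bijection of $Y_{1:r}$. Once the split $X^r = (\tilde X^r, \tilde Y)$ and the chain-rule inequality $TC(A, B) \geq TC(A | B)$ are in hand, the remaining manipulations are routine. Notably, no linearity or Gaussianity assumption is required---only the two structural properties of the remainder from Theorem~\ref{incremental}.
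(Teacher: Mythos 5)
Your proof is correct and takes essentially the same route as the paper's: the paper also reduces $TC(X|Y_{1:r})$ to $TC(\tilde X^r|\tilde Y^r)$ by the change of variables between layer $0$ and layer $r$, and then writes this as $TC(X^r) - TC(Y^r_{1:r}) - \sum_{i=1}^n I(X_i^r;Y^r_{1:r}) \leq TC(X^r)$, which is exactly your decomposition identity with the same two non-negative terms dropped. You simply perform the decomposition and the change of variables in the opposite order.
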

\begin{proof}
The equality comes from Cor.~\ref{iterative}. 
\benn
\lefteqn{TC(X_{1:n}|Y_{1:r})}\\
 &= \left\langle \log \frac{ p(x_{1:n}|y_{1:r})}{\prod_{i=1}^n p(x_i|y_{1:r})} \right\rangle = \left\langle \log \frac{p(x_{1:n}^r | y_{1:r}^r )}{\prod_{i=1}^n p(x_i^r | y_{1:r}^r )} \right\rangle \\
&= \left\langle \log \frac{p(x_{1:n}^r, y_{1:r}^r )}{\prod_{i=1}^n p(x_i^r) \prod_{k=1}^r p(y_k^r)} 
\frac{\prod_{i=1}^n p(x_i^r) \prod_{k=1}^r p(y_k^r)}{p(y_{1:r}^r) \prod_{i=1}^n p(x_i^r | y_{1:r}^r )} \right\rangle \\
&= TC(X^r) + \left\langle \log \frac{\prod_{i=1}^n p(x_i^r) \prod_{k=1}^r p(y_k^r)}{p(y_{1:r}^r) \prod_{i=1}^n p(x_i^r | y_{1:r}^r )} \right\rangle \\
&= TC(X^r) - TC(Y^r_{1:r}) - \sum_{i=1}^n I(X_i^r;Y^r_{1:r}) \\
& \leq TC(X^r)
\eenn
The first line follows from the the change of variables formula for the transformation connecting layer $r$ to the input layer. On the second line we multiply by 1 and re-arrange, collecting terms in the next two lines. 
The last inequality follows from non-negativity of TC and mutual information. 
\end{proof}

Recalling that $TC(X;Y) = TC(X)-TC(X|Y)$, Thm.~\ref{decomp} shows how the sum of terms optimized in Eq.~\ref{eq:opt_decompose} provide a successively tighter upper bound on the objective of Eq.~\ref{eq:opt1}. 
In other words, as we keep adding and optimizing latent factors they reduce the conditional TC until all the common information has been extracted.

\paragraph{Optimization} It remains to solve the optimization in Eq.~\ref{eq:opt_decompose}. For now we drop the $k$ index and focus on minimizing $TC(X|Y)$ for a single factor $Y$. To get a simple and tractable solution to this non-convex problem, we consider a further simplification where $X$ is Gaussian with covariance matrix $\Sigma$ and inverse covariance $\Lambda=\Sigma^{-1}$.  If $X$ is Gaussian and $Y$'s dependence on $X$ is linear and Gaussian, the joint distribution over $X, Y$ will also be Gaussian. We write out the optimization in Eq.~\ref{eq:opt_decompose} under this condition. 
\be\label{eq:opt2}
\min_{Y|X \sim \mathcal N(\vw \cdot \vx, \eta^2)} \sum_{i=1}^n H(X_i|Y) - H(X|Y)
\ee
Two immediate simplifications are apparent. First, this objective is invariant to scaling of $Y$. Any solution with $\eta, \vw$ would be equivalent to a scaled solution $s \eta, s \vw$. Therefore, without loss of generality we set $\eta=1$. Second, we invoke Bayes rule to see $H(X|Y) = H(Y|X) + H(X) - H(Y)$ where the first two terms on the right hand side are constants with respect to the optimization. We re-write the optimization accordingly. 
\benn
\min_{Y|X \sim \mathcal N(\vw \cdot \vx, 1)} \sum_{i=1}^n H(X_i|Y) + H(Y)
\eenn
The objective is invariant to translation of the marginals, so w.l.o.g. we also set $\langle X_i \rangle = \langle Y \rangle =0$. 
Define a nonlinear change of variables in terms of the correlation coefficient, $\rho_i = \la X_i Y\ra/\sqrt{\la X_i^2 \ra \la Y^2 \ra}$. To translate between $\vw$ and $\vr$, we also note, $(\Sigma \vw)_i = \la X_i Y\ra, \vw = \Lambda \vr \sqrt{\la X_i^2 \ra \la Y^2 \ra}$ and $\la Y^2 \ra = 1 / (1 -\vr^\top \Lambda \vr) = \vw^\top \Sigma \vw + 1$. 
This leads to the following optimization, neglecting some constants.  
\benn
\min_{Y|X \sim \mathcal N(\vw \cdot \vx, 1)} \sum_{i=1}^n 1/2 \log(1 - \rho_i^2) - 1/2 \log (1-\rho^\top \Lambda \rho)
\eenn
Next, we set derivatives with respect to each $\rho_i$ to zero. 
$$\partial_{\rho_i} \mathcal TC(X|Y) = -\rho_i / (1-\rho_i)^2 + \Lambda \vr /(1-\vr^\top \Lambda \vr)  = 0.$$ 
Now we use the identities to translate back to a fixed-point equation in terms of $\vw$ and rearrange. 
\begin{eqnarray}\label{eq:fixed}
w_i = \frac{\langle X_i Y \rangle}{\langle X_i^2 \rangle  \langle Y^2 \rangle - \langle X_i Y \rangle^2}
\end{eqnarray}
Interestingly, we arrive at a novel nonlinear twist on the classic Hebbian learning rule~\cite{baldi}. If $X_i$ and $Y$ ``fire together they wire together'' (i.e. correlations lead to stronger weights), but this objective strongly prefers correlations that are nearly maximal, in which case the denominator becomes small and the weight becomes large. 
This optimization of $TC(X|Y)$ for continuous random variables $X$ and $Y$ is, to the best of our knowledge, the first tractable approach except for a special case discussed by \cite{gastpar}. Also note that although we used $\Sigma, \Lambda$ in the derivation, the solution does not require us to calculate these computationally intensive quantities.

A final consideration is the construction of remainder information (i.e., how to get $X^k$ from $X^{k-1}$ and $Y$ in Fig.~\ref{fig:sieve}) consistent with the requirements in Thm.~\ref{exact}. In the discrete formulation of the sieve, constructing remainder information is a major problem that ultimately imposes a bottleneck on its usefulness because the state space of remainder information can grow quickly. In the linear case, however, the construction of remainder information is a simple linear transformation reminiscent of incremental PCA. 
We define the remainder information with a linear transformation, $X_i^k = X_i^{k-1} - \langle X_i^{k-1} Y_k \rangle / \langle Y_k^2 \rangle  Y_k$. This transformation is clearly invertible (condition (i)), and it can be checked that $\langle X_i^k Y_k \rangle = 0$ which implies $I(X_i^k ;Y_k)=0$ (condition (2)). 

\paragraph{Generalizing to the Non-Gaussian, Nonlinear Case}
The solution for the linear, Gaussian case is more flexible than it looks. We do not actually have to require that the data, $X$, is drawn from a \emph{jointly} normal distribution to get meaningful results. It turns out that if each of the individual marginals is Gaussian, then the expression for mutual information for Gaussians provides a lower bound for mutual information~\cite{mi_bound}. Also, the objective (Eq.~\ref{eq:opt1}) is invariant under invertible transformations of the marginals~\cite{cover}. Therefore, to ensure that the optimization that we solved (Eq.~\ref{eq:opt2}) is a lower bound for the optimization of interest, we should transform the marginals to be individually Gaussian distributed.  
Several nonlinear, parametric methods to Gaussianize one-dimensional data exist, including a recent method that works well for long-tailed data~\cite{lambert}. Alternatively, a nonparametric approach is to Gaussianize data based on the rank statistics~\cite{order_tests}. Finally, \cite{nonparanormal} study information measures for a large family of distributions that can be nonparametrically transformed into normal distributions.


\section{Implementation Details}\label{sec:implement}

\paragraph{A Single Layer} A concrete implementation of one layer of the sieve transformation is straightforward and the algorithm is summarized in Alg.~\ref{alg1}. Our implementation is available online~\cite{code_sieve}. The minimal preprocessing of the data is to subtract the mean of each variable. Optionally, further Gaussianizing preprocessing can be applied. Our fixed point optimization requires us to start with some weights, $\vw^{0}$ and we iteratively update $\vw^{t}$ using Eq.~\ref{eq:fixed} until we reach a fixed point. This only guarantees that we find a local optima so we typically run the optimization 10 times and take the solution with the highest value of the objective. We initialize $\vw^0_i$ to be drawn from a normal with zero mean and scale $1/\sqrt{n \sigma_{x_i}^2}$. 
We scale each $w_i^0$ by the standard deviation of each marginal so that one variable does not strongly dominate the random initialization, $y = \vw^0 \cdot \vx$. 

\begin{algorithm}
{ \small
 \KwData{Data matrix, $N$ iid samples of vectors, $\vx \in \mathbb R^n$}
 \KwResult{Weights, $\vw$, so that $y=\vw \cdot \vx$ optimizes $TC(X;Y)$ and remainder information, $\bar \vx$.}
 Subtract mean from each column of data\;
 Initialize $w_i \sim \mathcal N(0, 1 / (\sqrt{n} \sigma_{x_i}))$\;
 \While{not converged}{
  Calculate $y=\vw \cdot \vx$ for each sample \;
  Calculate moments from data, $\langle X_i Y \rangle, \langle Y^2 \rangle, \langle X_i^2 \rangle$\;
  $\forall i, w_i \leftarrow  \langle X_i Y \rangle / (\langle Y^2 \rangle \langle X_i^2 \rangle - \langle X_i Y \rangle^2) $\;
  }
  For each column of data, $i$, return $\bar x_i = x_i - \frac{\langle X_i Y\rangle}{\langle Y^2\rangle} y$ \;
  }
   \caption{\small Algorithm to learn one layer of the sieve.}\label{alg1}
\end{algorithm}

The iteration proceeds by estimating marginals and then applying Eq.~\ref{eq:fixed}. 
Estimating the covariance at each step is the main computational burden, but the steps are all linear. If we have $N$ samples and $n$ variables, then we calculate labels for each data point, $y = \vw \cdot \vx$, which amounts to $N$ dot products of vectors with length $n$. Then we calculate the covariance, $\langle X_i Y \rangle$, which amounts to $n$ dot products of vectors of length $N$. These are the most intensive steps and could be easily sped up using GPUs or mini-batches if $N$ is large. Convergence is determined by checking when changes in the objective of Eq.~\ref{eq:opt2} fall below a certain threshold, $10^{-8}$ in our experiments.  


\paragraph{Multiple Layers} 
After training one layer of the sieve, it is trivial to take the remainder information and feed it again through Alg.~\ref{alg1}. While our optimization in Eq.~\ref{eq:opt2} formally involved a probabilistic function, we take the final learned function to be deterministic, $y = \vw \cdot \vx$, as required by Thm.~\ref{incremental}.  
Each layer contributes $TC(X^{k-1};Y_k)$ in our decomposition of $TC(X)$, so we can stop when these contributions become negligible. This occurs when the variables in $X^k$ become independent. In that case, $TC(X|Y_{1:k}) = TC(X^k)=0$ and since $TC(X^k) \geq TC(X^{k};Y_{k+1})$, we get no more positive contributions from optimizing $TC(X^{k};Y_{k+1})$. 

\section{Results}\label{sec:results}

We begin with some benchmark results on a synthetic model. We use this model to show that the sieve can uniquely recover the hidden sources, while other methods fail to do so. 

\begin{SCfigure}[][tbp]
   \includegraphics[width=0.5\columnwidth]{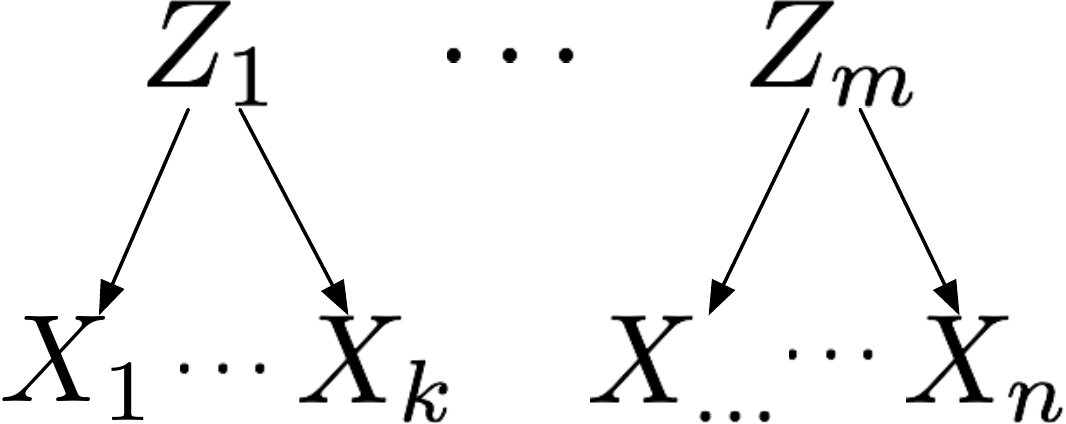}
   \caption{This is the generative model used for synthetic experiments. Each observed variable, $X_i = Z_{pa(i)} + \varepsilon_i$ combines its parent, $Z_{pa(i)}$, with Gaussian noise.}\label{fig:model}
\end{SCfigure}

\paragraph{Data Generating Model}
For the synthetic examples, we consider data generated according to a model defined in Fig.~\ref{fig:model}. We have $m$ sources, each with unit variance, $Z_j \sim \mathcal N (0, 1)$. Each source has $k$ children and the children are not overlapping. Each channel is an additive white Gaussian noise (AWGN) channel defined as $X_i = Z_{pa(i)} + \varepsilon_i$. The noise has some variance that may be different for each observed variable, $\varepsilon_i \sim \mathcal N (0, \epsilon_i^2)$.  Each channel can be characterized as having a capacity, $C_i = 1/2 \log (1 + 1/\epsilon_i^2)$~\cite{cover}, and we define the total capacity, $C = \sum_{i=1}^k C_i$. For experiments, we set $C$ to be some constant, and we set the noise so that the fraction, $C_i / C$, allocated to each variable, $X_i$, is drawn from the uniform distribution over the simplex. 

\paragraph{Empirical Convergence Rates} 
We examine how quickly the objective converges by plotting the error at the $t$-th iteration. 
The error is defined as the difference between TC at each iteration and the final TC. We take the final value of TC to be the value obtained when the magnitude of changes falls below $10^{-14}$. We set $C=1$ for these experiments. 
In Fig.~\ref{fig:convergence}, we look at convergence for a few different settings of the generative model and see linear rates of convergence (where error is plotted on a log scale, as is conventional for convergence plots), with a coefficient that seems to depend on problem details. The slowest rate of convergence comes from data where each $X_i$ is generated from an independent normal distribution (i.e., there is no common information).
\begin{figure}[tbp] 
   \centering
   \includegraphics[width=0.9\columnwidth]{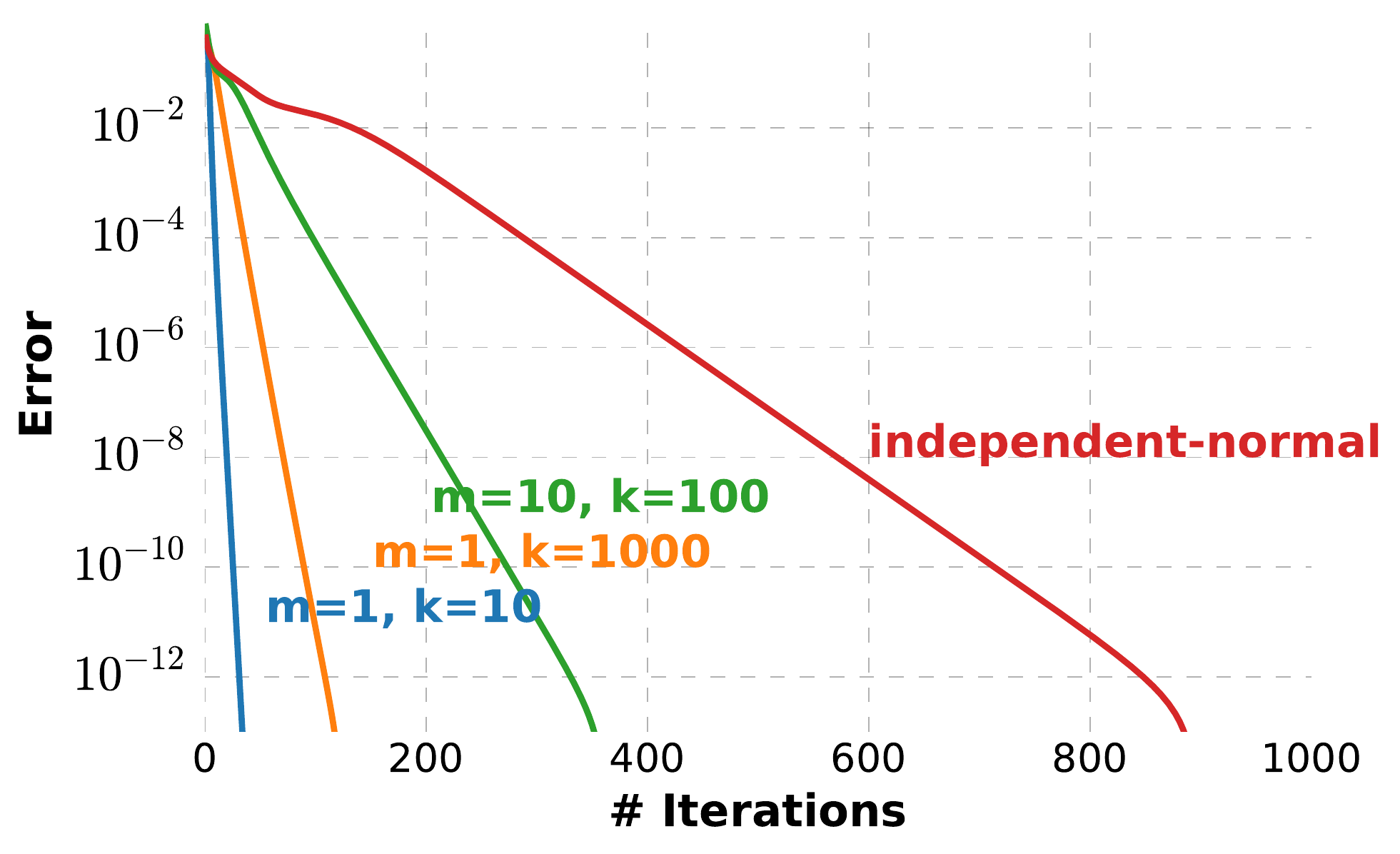} \qquad
   \caption{Empirical error plots on synthetic data show linear rates of convergence. We obtained similar results on real-world data. }
   \label{fig:convergence}
\end{figure}

\paragraph{Recover a Single Source from Common Information}
As a first test of performance, we consider a simple version of the model in Fig.~\ref{fig:model} in which we have just a single source and we have $k$ observed variables that are noisy copies of the source. For this experiment, we set total capacity to $C=4$. 
By varying $k$, we are spreading this capacity across a larger number of noisier variables. We use the sieve to recover a single latent factor, $Y$, that captures as much of the dependence as possible (Eq.~\ref{eq:opt1}), and then we test how close this factor is to the true source, $Z$, using Pearson correlation. We also compare to various other standard methods: PCA~\cite{halko2011finding}, ICA~\cite{ica}, Non-Negative Matrix Factorization (NMF)~\cite{lin2007projected}, Factor Analysis (FA)~\cite{cattell}, Local Linear Embedding (LLE)~\cite{roweis2000nonlinear}, Isomap~\cite{tenenbaum2000global}, Restricted Boltzmann Machines (RBMs)~\cite{hintonRBM}, and k-Means~\cite{sculley2010web}.
All methods were run using implementations in the scikit library~\cite{pedregosa2011scikit}.

\begin{figure*}[tbp] 
   \centering
(a)   \includegraphics[width=0.9\columnwidth]{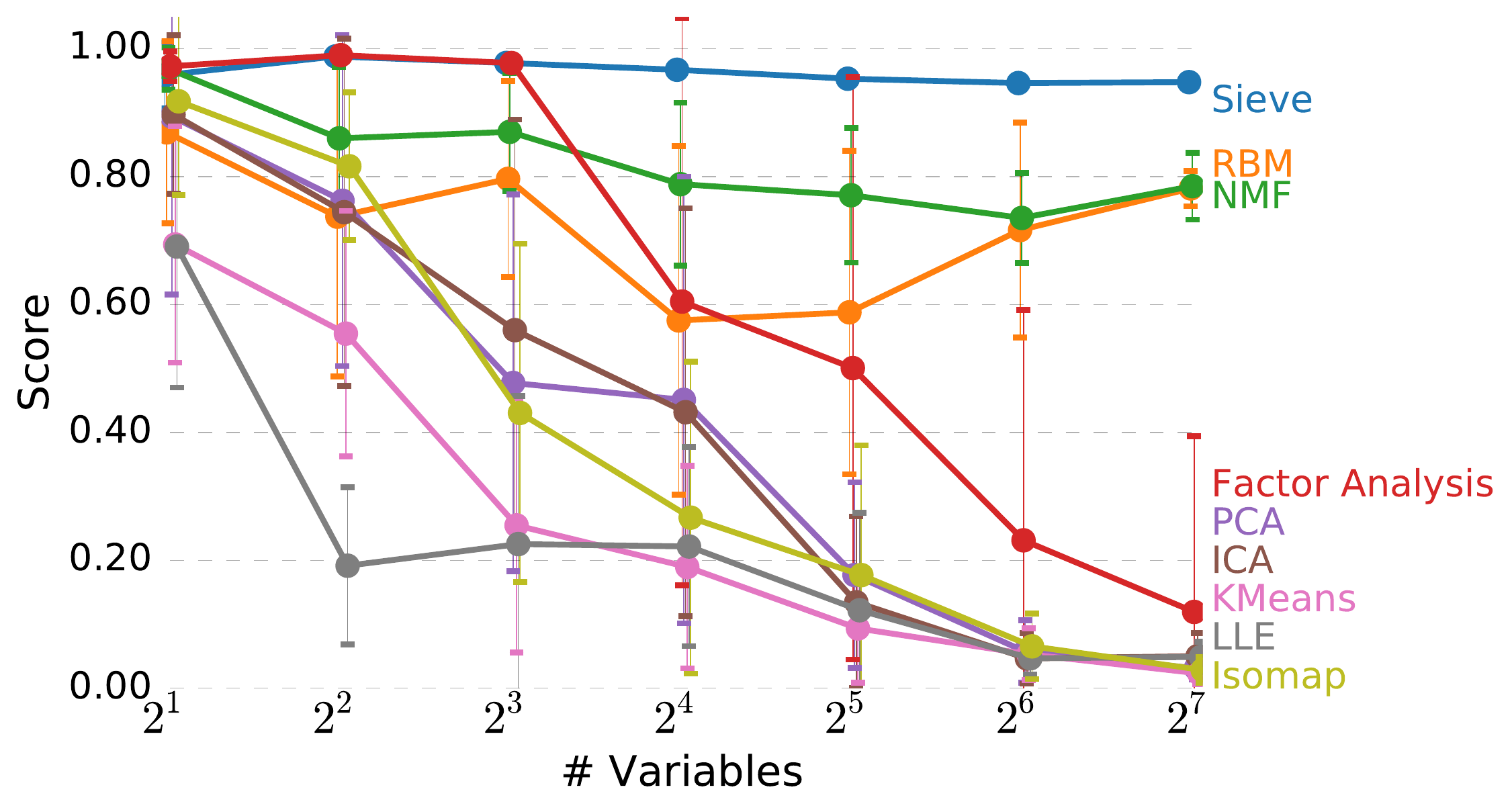} 
   (b)    \includegraphics[width=0.9\columnwidth]{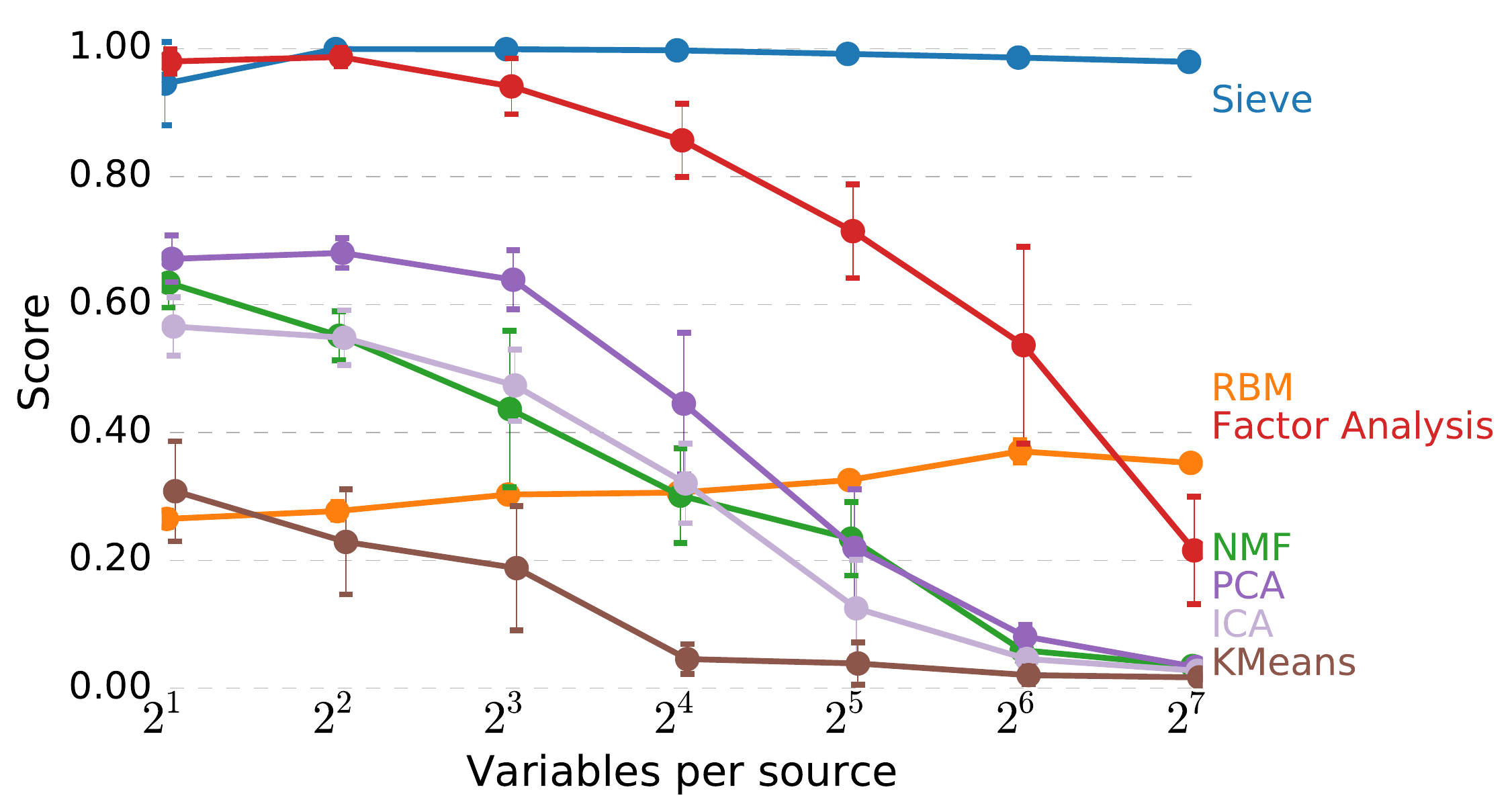} 
   \caption{Each source is compared to the best match of the components returned by each method. The score is the average of the absolute Pearson correlations. Each point is a mean score over ten randomly generated datasets, with error bars representing standard deviation. 
   (a) We attempt to recover a single hidden source variable from varying numbers of observed variables. We set $C=4$ and use 500 samples. 
   (b) We attempt blind source separation for ten independent, hidden source variables given varying numbers of observed variables per source.
We set $C=12$ and use 10000 samples.  } \vspace{-2mm}
   \label{fig:single}
\end{figure*}

Looking at the results in Fig.~\ref{fig:single}(a), we see that for a small number of variables almost any technique suffices to recover the source. As the number of variables rises, however, intuitively reasonable methods fail and only the sieve maintains high performance. The first component of PCA, for instance, is the projection with the largest variance but it can be shown that by changing the scale of the noise in different directions, this component can be made to point in any direction.
Unlike PCA, the sieve is invariant under scale transformations of each variable. 
Error bars are produced by looking at the standard deviation of results over 10 randomly generated datasets. Some error bars are smaller than the plot markers. Besides being the most accurate method, the sieve also has the smallest variance.

\subsection{Source Separation with Common Information}
In the generative model in Fig.~\ref{fig:model}, we have $m$ independent sources that are each Gaussian distributed. We could imagine applying an orthonormal rotation, $R$, to the vector of sources and call these $\tilde Z_j = \sum_k R_{jk} Z_k$. Because of the Gaussianity of the original sources, $\tilde Z$ also represent $m$ independent Gaussian sources. We can write down an equivalent generative model for the $X_i$'s, but each $X_i$ now depends on all the $\tilde Z$ (i.e., $X_i = \sum_j R^{-1}_{i,j} \tilde Z_j + \varepsilon_i$). 
From a generative model perspective, our original model is unidentifiable and therefore independent component analysis cannot recover it~\cite{ica}. 
On the other hand, the original generating model is special because the common information about the $X_i$'s are localized in invidivual sources, while in the rotated model, you need to combine information from all the sources to predict any individual $X_i$. The sieve is able to uniquely recover the true sources because they represent the optimal way to sift out common information. 


To measure our ability to recover the independent sources in our model, we consider a model with $m=10$ sources and varying numbers of noisy observations. The results are shown in Fig.~\ref{fig:single}(b). 
We learn 10 layers of the sieve and check how well $Y_1,\ldots, Y_{10}$ recover the true sources. We also specify 10 components for the other methods shown for comparison. 
As predicted, ICA does not recover the independent sources. 
While the generative model is in the class described by Factor Analysis (FA), there are many FA models that are equally good generative models of the data. In other words, FA suffers from an identifiability problem that makes it impossible to uniquely pick out the correct model~\cite{cosma_book}. 
In contrast, common information provides a simple and effective principle for uniquely identifying the true sources. 


  \begin{figure*}[tbp] 
   \centering
      \includegraphics[width=0.49\textwidth]{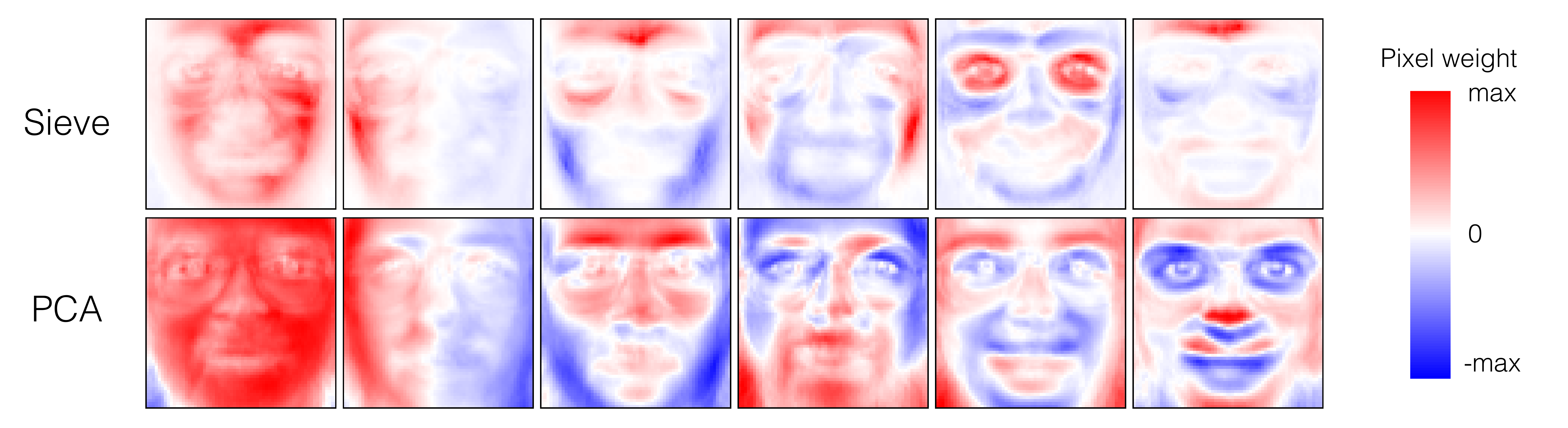}
   \includegraphics[width=0.49\textwidth]{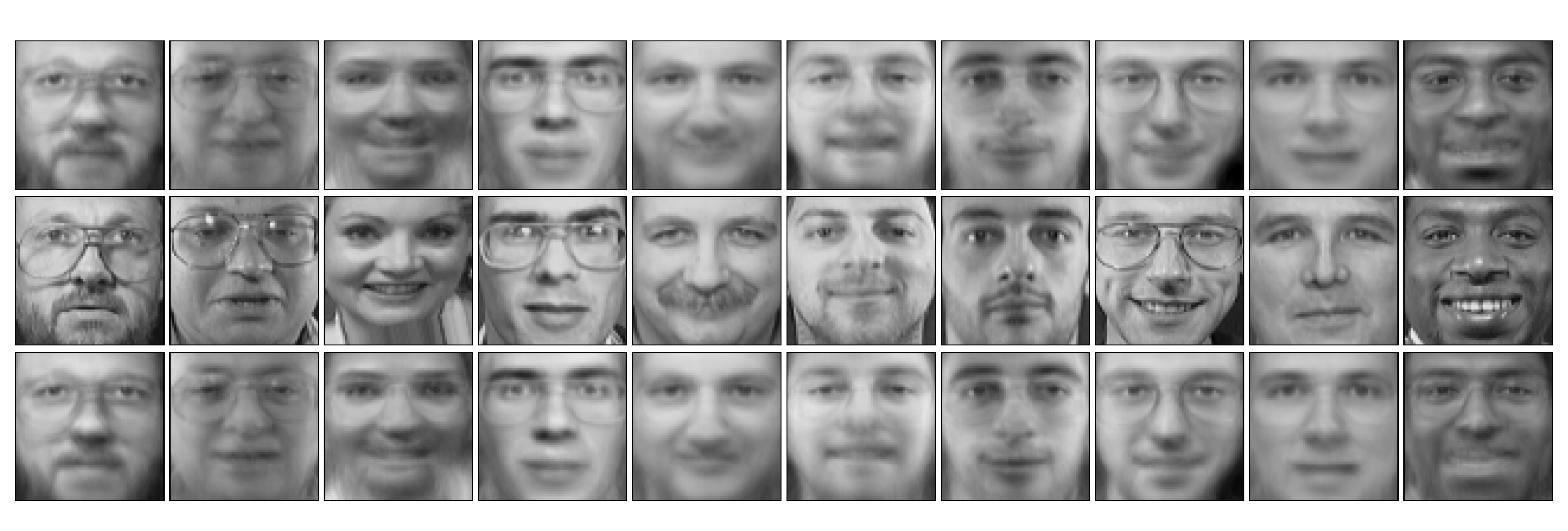} 
   \caption{(Left) The top 6 components for the Olivetti faces dataset using the information sieve (top) and PCA (bottom). Red and blue correspond to negative and positive weights respectively. (Right) We take Olivetti faces (middle row) and then try to reconstruct them using the top 20 components from the sieve (top row) or PCA (bottom row).}
   \label{fig:reconstruct}
\end{figure*}
\paragraph{Exploratory Data Analysis} The first component of PCA explains the most variance in the data, and the weights of the first component are often used in exploratory analysis to understand the semantics of discovered factors. Analogously, the first component of the sieve extracts the largest source of common information.  In Fig.~\ref{fig:reconstruct} we compare the top components learned by the sieve on the Olivetti faces dataset to those learned by PCA. 
The sieve may be more practical for extracting components if data is high dimensional since its complexity is linear in the number of variables while PCA is quadratic.
Like PCA, we can also use the sieve for reconstructing data from a small number of learned factors. Note that the sieve transform is invertible so that $X_i = X_i^1 + \langle X_i^0 Y_1 \rangle / \langle Y_1^2 \rangle Y_1$. If we have a sieve transformation with $r$ layers, then we can continue this expansion as follows.  
$$X_i = X_i^r + \sum_{k=1}^r\langle X_i^{k+1} Y_k \rangle / \langle Y_k^2 \rangle Y_k$$
If we knew the remainder information, $X_i^r$, this reconstruction would be perfect. However, we can simply set the $X_i^r=0$ and we will get a prediction for $X_i$ based only on the learned factors, $Y$, as in Fig.~\ref{fig:reconstruct}.

\paragraph{Source Separation in fMRI Data}
To demonstrate that our approach is practical for blind source separation in a more realistic scenario, we applied the sieve to recover spatial brain components from fMRI data. This data is generated according to a synthetic but biologically motivated model that incorporates realistic spatial modes and heterogeneous temporal signals~\cite{fmri_sim}. We show in Fig.~\ref{fig:mri}(b) that we recover components that match well with the true spatial components. For comparison, we show ICA's performance in Fig.~\ref{fig:mri}(c) which looks qualitatively worse. ICA's poor performance for recovering spatial MRI components is known and various extensions have been proposed to remedy this~\cite{fmri_subject}. This preliminary result suggests that the concept of ``common information'' may be a more useful starting point than ``independent components'' as an underlying principle for brain imaging analysis.
\begin{figure}[tbp] 
   \centering
   \small
   \includegraphics[width=0.31\columnwidth]{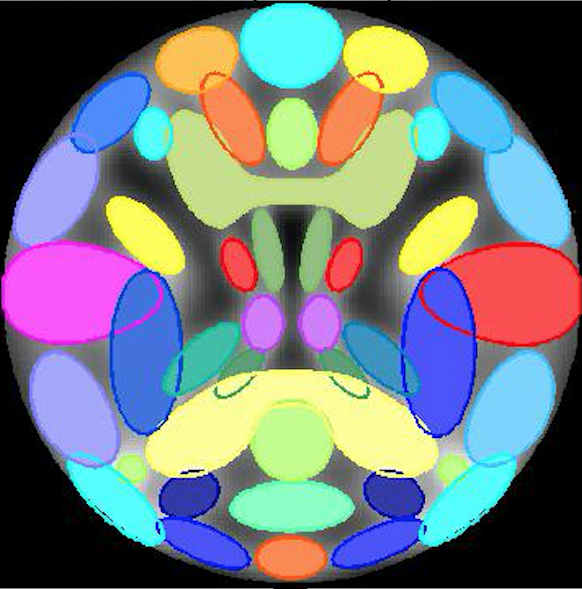}  ~ \includegraphics[width=0.31\columnwidth]{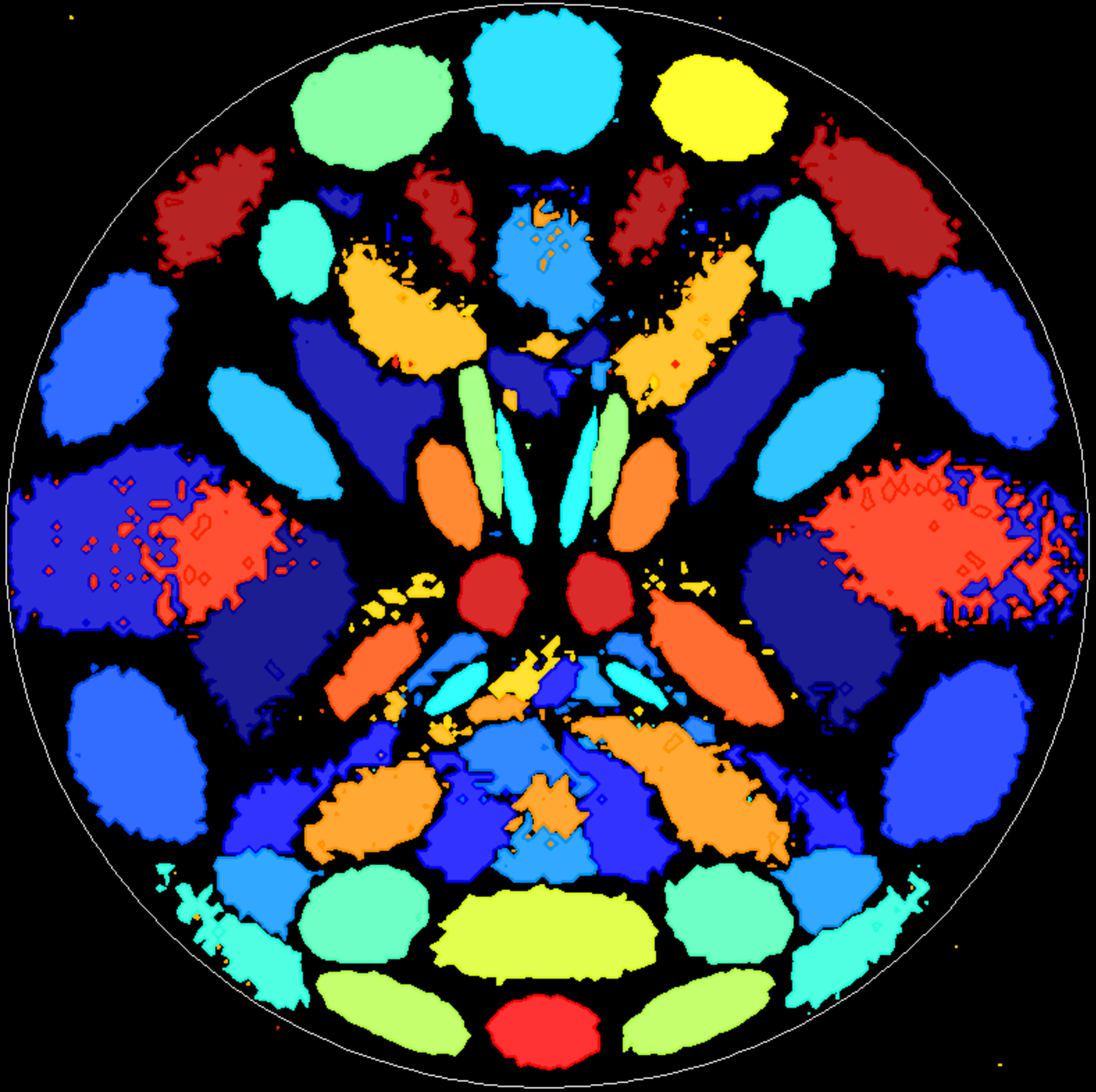} ~  \includegraphics[width=0.31\columnwidth]{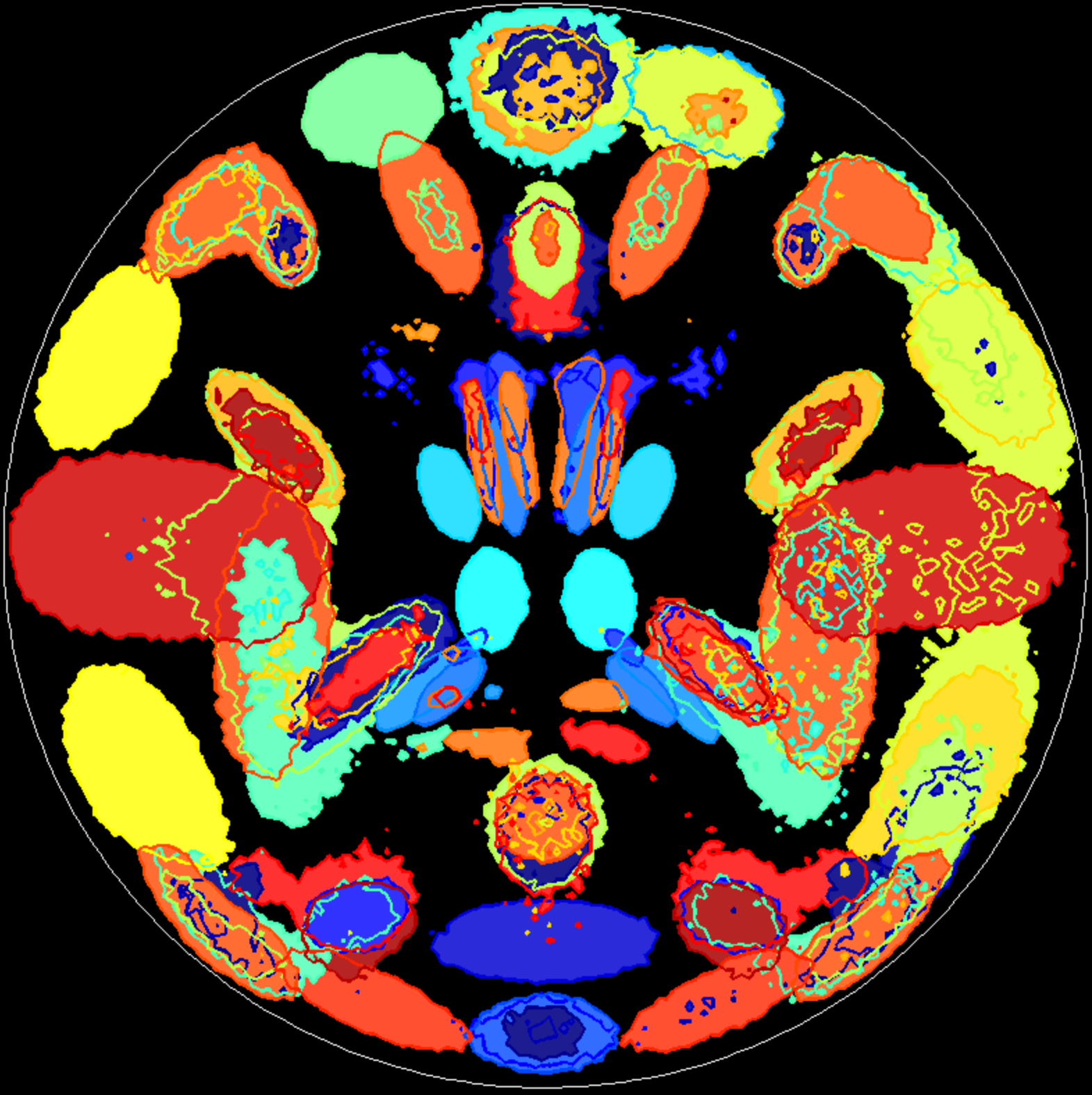} \\
   (a) \hspace{0.27 \columnwidth} (b) \hspace{0.27 \columnwidth} (c)
   \caption{Colors represent different spatial components. (a) The spatial map of 27 components used to generate fMRI data. (b) 27 spatial components recovered by the information sieve. (c) 27 spatial components recovered by ICA where components visualize the recovered mixing matrix.  }
   \label{fig:mri}
\end{figure}


\subsection{Dimensionality Reduction}

The sieve can be viewed as a dimensionality reduction (DR) technique. Therefore, we apply various DR methods to two standard datasets and use a Support Vector Machine with a Gaussian kernel to compare the classification accuracy after dimensionality reduction. The two datasets we studied were GISETTE and MADELON and consist of 5000 and 500 dimensions respectively. 
For each method and dataset, we learn a low-dimensional representation on training data and then transform held-out test data and report the classification accuracy on that. The results are summarized in Fig.~\ref{fig:real}. 

\begin{SCfigure*}[][tbp]
    \centering
         (a)\includegraphics[width=0.75\columnwidth]{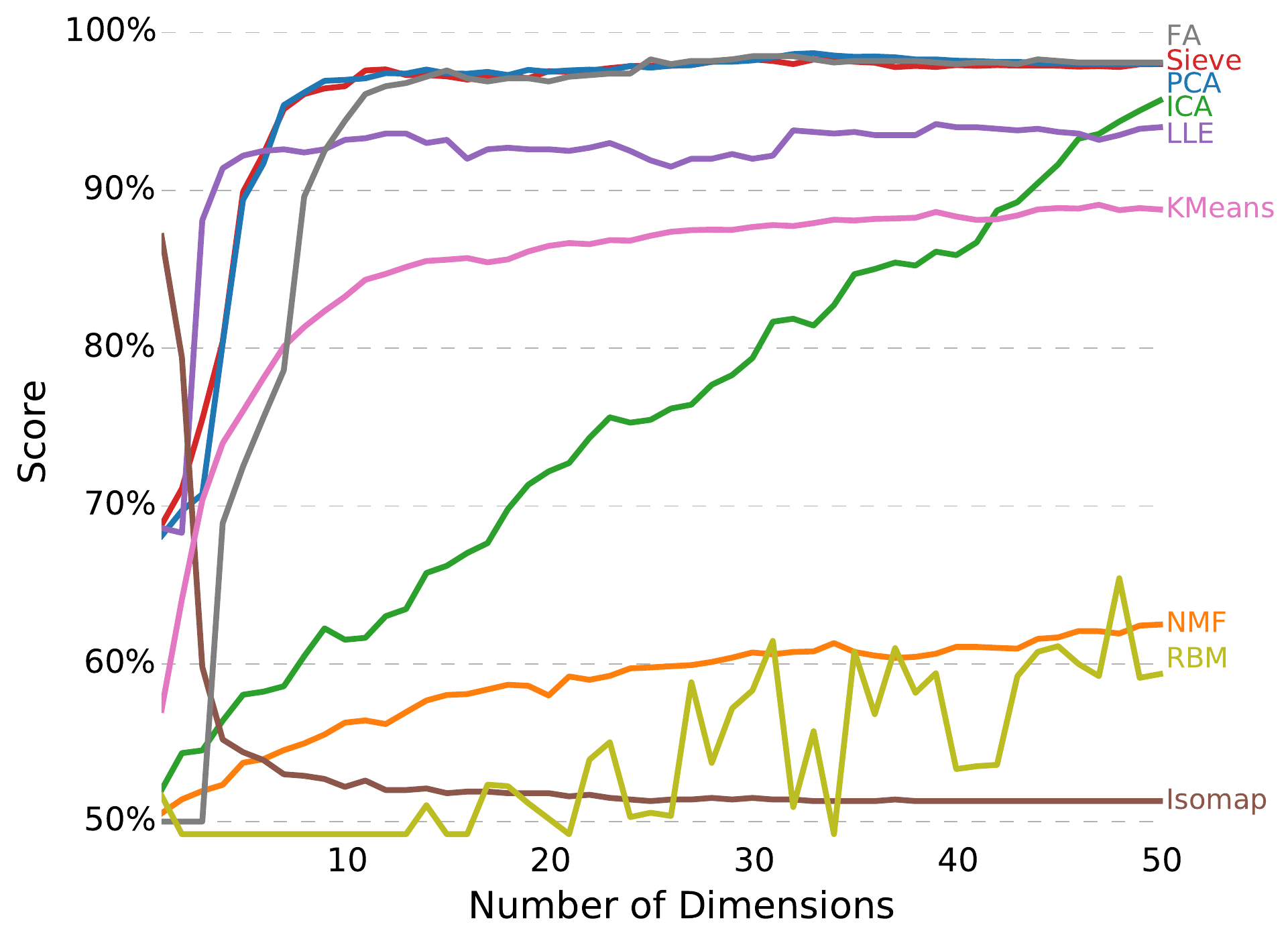}
          (b)\includegraphics[width=0.75\columnwidth]{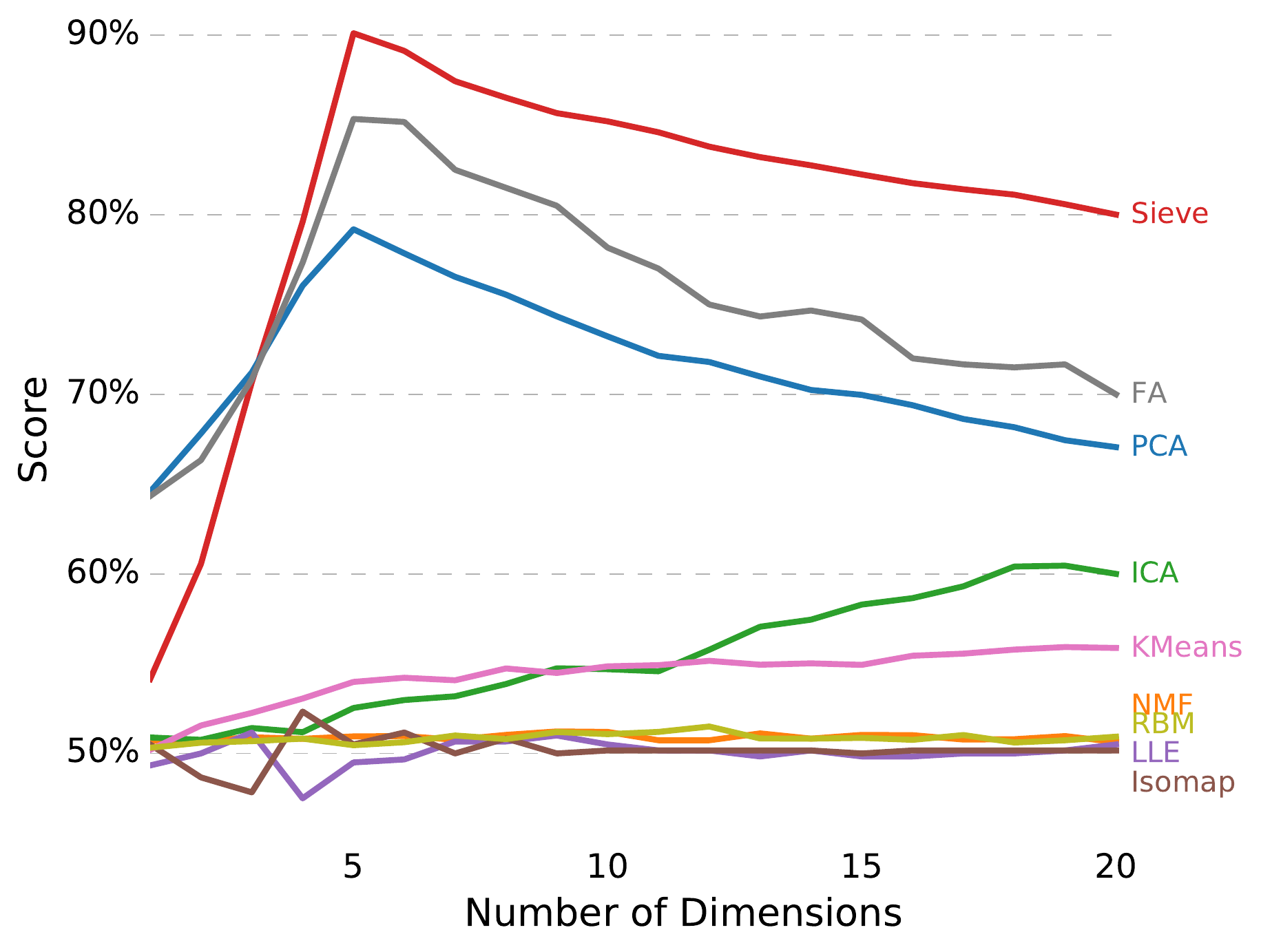}
\caption{ (a) Validation accuracy for GISETTE dataset (b) Validation accuracy for MADELON dataset. All the scores are averaged by running 20 trials.}\label{fig:real} \vspace{-2mm}
\end{SCfigure*}
For the GISETTE dataset, we see factor analysis, the sieve, and PCA performing the best, producing low dimensional representations with similar quality using a relatively small number of dimensions. 
For the MADELON dataset, the sieve representation gives the best accuracy with factor analysis and PCA resulting in accuracy drops of about five and ten percent respectively. Interestingly, all three techniques peak at five dimensions, which was intended to be the correct number of latent factors embedded in this dataset~\cite{guyon2004result}.

\section{Related Work}\label{sec:related}

Although the sieve is linear, the information objective that is optimized is nonlinear so the sieve substantially differs from methods like PCA. Superficially, the sieve might seem related to methods like Canonical Correlation Analysis (CCA) that seek to find a $Y$ that makes $X$ and $Z$ independent, but that method requires some set of labels, $Z$. One possibility would be to make $Z$ a copy of $X$, so that $Y$ is reducing dependence between $X$ and a copy of itself~\cite{fei_kernel}. However, this objective differs from common information as can be seen by considering the case where $X$ consists of independent variables. In that case the common information within $X$ is zero, but $X$ and its copy still have dependence. The concept of ``common information'' has largely remained restricted to information-theoretic contexts~\cite{xu_wyner,wyner_common,common,gastpar,gastpar2}. 
The common information in $X$ that is \emph{about} some variable, $Z$, is called intersection information and is also an active area of research~\cite{griffith_common}. 

Insofar as the sieve reduces the dependence in the data, it can be seen as an alternate approach to independent component analysis~\cite{comon} that is more directly comparable to ``least dependent component analysis''~\cite{lca}. 
As an information theoretic learning framework, the sieve could be compared to the information bottleneck~\cite{tishby}, which also has an interesting Gaussian counterpart~\cite{gib}. The bottleneck requires labeled data to define its objective. In contrast, the sieve relies on an unsupervised objective that fits more closely into a recent program for decomposing information in high-dimensional data~\cite{corex,corex_theory,sieve}, except that work focused on discrete latent factors.

The sieve could be viewed as a new objective for projection pursuit~\cite{friedman} based on common information. The sieve stands out from standard pursuit algorithms in two ways. First, an information based ``orthogonality'' criteria for subsequent projections naturally emerges and, second, new factors may depend on factors learned at previous layers (note that in Fig.~\ref{fig:sieve} each learned latent factor is included in the remainder information that is optimized over in the next step). 
More broadly, the sieve can be viewed as a new approach to unsupervised deep representation learning~\cite{bengioreview,hintonRBM}. In particular, our setup can be directly viewed as an auto-encoder with a novel objective~\cite{autoencoders}. From that point of view, it is clear that the sieve can also be directly leveraged for unsupervised density estimation~\cite{nice}. 

\section{Conclusion}\label{sec:conclusion}

We introduced a new scheme for incrementally extracting common information from high-dimensional data. 
The foundation of our approach is an efficient information theoretic optimization that finds latent factors that capture as much information about multivariate dependence in the data as possible. 
With a practical method for extracting common information from high-dimensional data, we were able to explore new applications of common information in machine learning. Besides promising applications for exploratory data analysis and dimensionality reduction, common information seems to provide a compelling approach to blind source separation. 
 
While the results here relied on assumptions of linearity and Gaussianity, the invariance of the objective under nonlinear marginal transforms, a common ingredient in deep learning schemes, suggests a straightforward path to generalization that we leave to future work. The greedy nature of the sieve construction may be a limitation so another potential direction would be to jointly optimize several latent factors at once~\cite{nips2017}.
Sifting out common information in high-dimensional data provides a practical and distinctive new principle for unsupervised learning.

\section*{Acknowledgments}
GV thanks Sanjoy Dasgupta, Lawrence Saul, and Yoav Freund for encouraging exploration of the linear, Gaussian case for decomposing multivariate information. This work was supported in part by DARPA grant W911NF-12-1-0034 and IARPA grant FA8750-15-C-0071. 

{
\small
 \bibliographystyle{named}
\bibliography{gversteeg_bibdesk,shuyang} 
}

\end{document}